\DeclareMathOperator*{\argmax}{arg\,max}
\title[Selecting Pseudo-Labeled Data Reliably]{
  In all Likelihood\textit{s}: How to Reliably Select Pseudo-Labeled Data for Self-Training in Semi-Supervised Learning
}
\author{
  \Name{Julian Rodemann}\Email{julian@stat-uni-muenchen.de}\\
  \Name{Christoph Jansen}\Email{christoph.jansen@stat.uni-muenchen.de}\\
  \Name{Georg Schollmeyer}\Email{georg.schollmeyer@stat.uni-muenchen.de}\\
    \Name{Thomas Augustin}\Email{thomas.augustin@stat.uni-muenchen.de}\\
  \addr Department of Statistics, Ludwig-Maximilians-Universität (LMU) Munich
}
\begin{document}

\maketitle

\begin{abstract}

Self-training is a simple yet effective method within semi-supervised learning. The idea is to iteratively enhance training data by adding pseudo-labeled data. Its generalization performance heavily depends on the selection of these pseudo-labeled data (PLS). 
In this paper, we aim at rendering PLS more robust towards the involved modeling assumptions. To this end, we propose to select pseudo-labeled data that maximize a multi-objective utility function. The latter is constructed to account for different sources of uncertainty, three of which we discuss in more detail: model selection, accumulation of errors and covariate shift. In the absence of second-order information on such uncertainties, we furthermore consider the generic approach of the generalized Bayesian $\alpha$-cut updating rule for credal sets. As a practical proof of concept, we spotlight the application of three of our robust extensions on simulated and real-world data. Results suggest that in particular robustness w.r.t. model choice can lead to substantial accuracy gains.\footnote{\textbf{Open Science:} Implementations of the proposed methods and
reproducible scripts for the experimental analysis are available at:\\ \href{https://github.com/rodemann/reliable-pls}{www.github.com/rodemann/reliable-pls}.
%
}





\end{abstract}
\begin{keywords}
  Semi-Supervised Learning, Self-Training, Generalized Bayes, Model Selection, Covariate Shift, Generalized Updating Rules
\end{keywords}


\section{Introduction}\label{sec:intro}

Labels for observations are burdensome to obtain in a myriad of applied learning tasks ranging from image classification~\cite{soleymani2022deep} over financial econometrics~\cite{shin2013prediction} to genomics~\cite{gunduz2021self}. This scarcity of labeled data has given rise to the paradigm of \textit{semi-supervised learning} (SSL). Within SSL, \textit{self-training} (also called pseudo-labeling) is often considered the most straight-forward approach~\cite{shi2018transductive, lee2013pseudo, mcclosky2006effective}. Self-training follows the general rationale of iteratively assigning pseudo-labels to unlabeled data according to the model's predictions. More precisely, the idea is to predict classes of unlabeled data by means of a model trained on labeled data and include some of the predictions as pseudo-labeled data in the training data, before predicting on the remaining unlabeled data again. This process requires a criterion (called confidence measure) for \textit{pseudo-label selection} (PLS), that is, the selection of pseudo-labeled instances to be added to the training data. 
What most of these confidence measures have in common is the fact of stemming uniquely and exclusively from one sole model. The paper at hand aims at a selection of pseudo-labeled data with regard to a variety of (fitted) models, rendering PLS robust with regard to model imprecision. The latter can have multiple sources. Section~\ref{sec:robust-pls} discusses how to deal with three of them in detail: model selection, accumulation of errors and covariate shift. In case such sources are not identifiable, we propose a generic robust approach to PLS in section~\ref{sec:alpha-cuts}, building on the rich literature on credal sets and generalized Bayesian inference. The remainder of this section discusses related work and introduces semi-supervised learning formally, leaning on~\cite{rodemann2023-bpls}. 
The paper concludes with a brief real world application and some concluding remarks in chapters~\ref{sec:application}~and~\ref{sec:discussion}.

\label{sec:background}
\subsection{Semi-supervised Learning}
\label{sec:ssl}
The vast majority of SSL methods is concerned with classification tasks~\cite{van2020survey, chapelle2006semi}. Loosely leaning on~\cite{triguero2015self}, we formalize SSL as follows. Consider labeled data 
\begin{equation}
\label{eq:unlabeled-data}
    \mathcal{D}=\left\{\left(x_{i}, y_{i}\right)\right\}_{i=1}^{n} \in \left(\mathcal{X} \times \mathcal{Y}\right)^{n}
\end{equation} and unlabeled data 
\begin{equation}
    \mathcal{U}=\left\{\left(x_{i}, \mathcal{Y}\right)_i\right\}_{i=n+1}^{m} \in \left(\mathcal{X} \times 2^\mathcal{Y}\right)^{m-n}
\end{equation} from the same data generation process, where $\mathcal{X}$ is the feature space and $\mathcal{Y}$ is the categorical target space. The aim of SSL is to learn a predictive classification function $\hat y_{\theta}(x)$ parameterized by $\theta$ utilizing both $\mathcal{D}$ and $\mathcal{U}$. 
The objective can be twofold~\cite{triguero2015self}. On the one hand, one simply aims at labeling $\mathcal{U}$ (transductive learning). On the other hand, and more commonly, both $\mathcal{D}$ and $\mathcal{U}$ can be used to learn a prediction function to predict any unseen test data (inductive learning) in a more accurate way than only relying on $\mathcal{D}$ as in classical supervised learning.  

\subsection{Self-training}
\label{sec:self-training}

 According to~\cite{pise2008survey} and~\cite{van2020survey}, SSL can be broadly categorized into self-training and co-training. We will focus on the former, whose general idea is commonly described as fitting a model on $\mathcal{D}$ by empirical risk minimization and then exploiting this model's predictions to label $\mathcal{U}$. Typically, those instances from $\mathcal{U}$ are added whose predictions are most confident according to some confidence measure. The predicted probability (probability score) is among the most popular ones~\cite{triguero2015self}. Besides, the predictions' variance as well as a linear combination of variance and probability score are used~\cite{rizve2020defense}. Regarding the inclusion of pseudo-labeled data from $\mathcal{U}$ to $\mathcal{D}$,~\cite{triguero2015self} and~\cite{kostopoulos2018semi} distinguish between incremental, batch-wise, and amending mechanisms. The incremental approaches label instances one-by-one in a sequential fashion, whereas batch-wise and amending techniques allow for adding of multiple data points or removal of data, respectively. Moreover,~\cite{triguero2015self} differentiate self-training methods into single- and multi-classifier ones, depending on the number of learned classifiers $\hat{y}(x)$ used while labeling. If multiple classifiers are used, they can either be based on the same model class or a variety of models. This is known as single- versus multi-learning, see~\cite{triguero2015self} for instance. Combining and aggregating the predictions and confidence measures of multiple classifiers can be done in various ways. This is slightly related to our proposed model-robust PLS, see sections~\ref{sec:robust-pls} and~\ref{sec:alpha-cuts}. The difference is of course that we \textit{select} pseudo-labeled data in the light of multiple models, while multi-learning deploys multiple models for \textit{predicting} pseudo-labels. 
 
 Additionally, self-training algorithms may have different stopping criteria~\cite{triguero2015self}. A naive option is to label and add the entire set $\mathcal{U}$. Alternatively, one could stop when $\hat y(x)$, the predictive classifier, no longer changes due to $\mathcal{U}$, leaving the remaining data in $\mathcal{U}$ unlabeled. In this paper, we propose both incremental and batch-wise approaches that can be used with any stopping criterion for the purpose of inductive learning.

\subsection{Superset Learning}
\label{sec:superset}

The notion of superset learning is a generalization of semi-supervised learning. Instead of completely unlabeled (i.e. fully ambiguos) data $\mathcal{U}=\left\{\left(x_{i}, \mathcal{Y}\right)_{i}\right\}_{i=1}^{m} \in\left(\mathcal{X} \times 2^\mathcal{Y}\right)^{m}$, superset learning considers $\left\{\left(x_{i}, Y_{i}\right)\right\}_{i=1}^{m} \in\left(\mathcal{X} \times 2^\mathcal{Y}\right)^{m}$, where $Y_i \subseteq \mathcal{Y}$. In this context, $Y_i$ is regarded a superset of a \say{true} underlying singleton $y_i$, thus the name. There exist optimistic as well as pessimistic variants of superset learning and approaches to balance these extreme cases~\cite{hullermeier2014learning, hullermeier2015superset, hullermeier2019learning, rodemann2022supersetlearning}. The general idea is to find a singleton representation (often called instantiation) of the supersets that corresponds to the most predictive (optimistic) or least predictive (pessimistic) model when trained and evaluated on it. In the optimistic case, this can be achieved by minimizing an optimistic version of the empirical risk, the generalized empirical risk: $\frac{1}{n} \sum_{i=1}^n L^*(\hat{y}_i, Y_i) =   \frac{1}{n} \sum_{i=1}^n \min_{y \in Y_i} L(\hat{y}_i , y)$ with $L^*$ the \textit{optimistic superset} or \textit{infimum loss}~\cite{cabannnes2020structured}. 

\subsection{Robust Semi-Supervised Learning}

The robustness of SSL and in particular of self-training has been widely discussed.~\cite{aminian2022information} propose an information-theoretic approach to pseudo-label prediction which is resistant to covariate shift.~\cite{vandewalle2013predictive} worked to make self-training more robust to modeling assumptions by allowing model selection through the deviance information criterion. Coming close to our use of credal sets in section~\ref{sec:alpha-cuts},~\cite{lienen2021credal, lienen2022conformal} suggest identifying pseudo-labels as sets of probability distributions (\say{credal self-supervised learning}).  Inspired by consistency regularization~\cite{Berthelot2019, Sohn2020, Zhang2021c}, superset learning~\cite{hullermeier2014learning, hullermeier2015superset, hullermeier2019learning, rodemann2022supersetlearning} and distributional alignment~\cite{remixmatch}, \say{credal self-supervised learning} aims at decreasing the reliance on a single distributional assumption. Our work follows the same rationale, while being conceptually different:~\cite{lienen2021credal, lienen2022conformal} start by imprecisiation of the training data by means of soft labels through data augmentation, thus obtaining set-valued predictions. In this paper, we exploit the expressiveness of credal sets only in the selection phase. Generally, there appears to be a large body of research on robustifying \textit{predictions} in SSL by means of Bayesian techniques~\cite{gordon2020combining, ng2018bayesian, adams2009archipelago}, weighted likelihood~\cite{sokolovska2008asymptotics}, conditional likelihood~\cite{grandvalet2004semi}, and joint mixture likelihood~\cite{amini2002semi}. On the other hand, there is only limited (Bayesian) or hardly any (likelihood-based) work regarding robust versions of Bayesian or likelihood-based \textit{selection} of pseudo-labels, which is the very idea of the paper at hand. The authors of~\cite{li2020pseudo} quantify the uncertainties of pseudo-labels by mixtures of predictive distributions of a neural net, utilizing MC dropout, thus simulating a Bayesian setup without explicitly considering the posterior predictive. More recently,~\cite{patel2022seq} proposed PLS with respect to the entropy of the pseudo-labels' posterior predictive distribution. 

\cite{rodemann2023-bpls} tackle the problem of pseudo-label selection (PLS) in semi-supervised learning from the viewpoint of decision theory, proposing Bayes optimal pseudo-label selection (BPLS). The idea is to make PLS more robust towards the initial fit by marginalizing over the parameters’ posterior instead of considering the predictive distribution of a single best parameter vector. While this allows for selecting pseudo-labeled data in light of more than one fit of a given model, BPLS is still restricted to the assumed (type of) model and the distributional assumptions that come with it. This is the very starting point for several robust extensions of BPLS, that will be presented in the main part of this paper, namely sections~\ref{sec:robust-pls} and~\ref{sec:alpha-cuts}. To begin with, we introduce the conditional view on PLS in section~\ref{sec:cond-pls}. This allows our understanding of (B)PLS as decision problems, as explained in section~\ref{sec:dec-problem}.

\section{Pseudo-Label Selection}
\label{sec:PLS}

\subsection{Conditional Pseudo-Label Selection}
\label{sec:cond-pls}

As in standard self-training, we start by fitting a parametric model $M$ with unknown parameter vector $\theta \in \Theta$ on labeled data $
    \mathcal{D}=\left\{\left(x_{i}, y_{i}\right)\right\}_{i=1}^{n} $. In this work, we assume $\Theta$ to be compact and denote $\dim(\Theta) = q$. Note that Bayesian inference smoothly integrates in this setup, since we might state a prior function over $\Theta$ for a given parametric model $M$ as $\pi(\theta \mid M)$. We aim at learning the conditional distribution of $p(\bm y \mid \bm x)$ through $\theta$ from observing features $\bm{x} = (x_1, \dots, x_n), x_i \in \mathcal{X}$, and classes $\bm{y} = (y_1, \dots, y_n), y_i \in \mathcal{Y}$ in $\mathcal{D}$. As touched upon in sections~\ref{sec:ssl} and~\ref{sec:self-training}, we start by estimating $\hat{\theta} \in \Theta$ from $M$ through the labeled data $\mathcal{D}$, predict on unlabeled data $\mathcal{U}$ and select those predicted (pseudo-labeled) data points that we are most confident in according to some selection criterion and add them to $\mathcal{D}$.
    
    Most importantly, throughout this paper, we do not deal with predicting unknown labels of  $\mathcal{U}=\left\{\left(x_{i}, \mathcal{Y}\right)_i\right\}_{i=1}^{m}$ by the fitted model on $\mathcal{D}$. Rather, we are primarily concerned with the problem of \textit{selecting} from those already predicted. That is, we identify each element in $\mathcal{U}=\left\{\left(x_{i}, \mathcal{Y}\right)_i\right\}_{i=n+1}^{m} \in \left(\mathcal{X} \times 2^\mathcal{Y}\right)^{m-n}$
with its corresponding prediction $\left\{\left(x_{i}, \hat{y}\right)\right\}_i$, obtaining 
$
    \hat{\mathcal{U}} = \left\{\left(x_{i}, \hat{y} \right)\right\}_{i=n+1}^{m} \in \left(\mathcal{X} \times \mathcal{Y}\right)^{m-n}.
$ However, we will stick with $\mathcal{U}$ in the following to emphasize that our reasoning holds for any functional $\hat y(x)$. This is not to say that we completely abstain from any specifications of the prediction method, see remark~\ref{remark:redundancy psl}, where we will rely on maximum-likelihood estimation.

\subsection{PLS as Decision Problem}
\label{sec:dec-problem}

Following~\cite{rodemann2023-bpls}, we formalize pseudo-label selection as a canonical decision problem with likelihood utility and thus lay the groundwork for several robust extensions of classical decision criteria.

\begin{definition}[Canonical Decision Problem]
\label{def:dec-probl}    
Define $(\mathbb{A}, \Theta, u(\cdot))$ as decision-theoretic triple with an action space $\mathbb{A}$, an unknown set of states of nature $\Theta$ and a utility function $u : \mathbb{A} \times \Theta \to \mathbb{R}$. 

\end{definition}

Throughout this section, we are concerned with the decision of selecting pseudo-labeled data, where an action corresponds to the selection of an instance from the unlabeled data $\mathbb{A}_{\mathcal{U}} = \{(z, \mathcal{Y}) \mid \exists \, i \in \{n+1, \dots, m\} : (z, \mathcal{Y}) = (x_i, \mathcal{Y})_i \in \mathcal{U}    \}$, i.e., instances as actions $ \mathbb{A}_{\mathcal{U}} \ni a = (z, \mathcal{Y})$. This is in stark contrast to statistical decision theory, where estimators instead of data are to be selected. The decision for an action is guided by a utility function. Closely following~\cite{rodemann2023-bpls} and loosely inspired by~\cite{cattaneo2007statistical, cattaneo2013likelihood}, we proceed by defining the utility of a selected data point $(z, \mathcal{Y}) = (x_i, \mathcal{Y})_i$ as the plausibility of being generated jointly with $\mathcal{D}$ by a model $M$ with states (parameters) $\theta \in \Theta$ if we include it with its predicted pseudo-label $\hat y(z) = \hat y(x_i) = \hat y_i \in \mathcal{Y}$ in $\mathcal{D} \cup (x_i, \hat{y}_i)$, see definition~\ref{def:pseud-lik}.

\begin{definition}[Pseudo-Label Likelihood as Utility]
\label{def:pseud-lik}
Given $\mathcal{D}$ and the prediction functional $\hat y: \mathcal{X} \to \mathcal{Y}$, we define the following utility function\begin{align*}
  u \colon \mathbb{A}_{\mathcal{U}} \times \Theta &\to \mathbb{R}\\
  ((z, \mathcal{Y}), \theta) &\mapsto u((z, \mathcal{Y}), \theta) = p(\mathcal{D} \cup (z, \hat{y}(z))\mid \theta, M),
  \end{align*}
  which is said to be the pseudo-label likelihood. In the following, for ease of exposition, we will write $\ell(i) := p(\mathcal{D} := p(i \mid \theta, M)  \cup (x_i, \hat{y}(x_i))\mid \theta, M)$ for the pseudo-label likelihood. 
\end{definition}

Based on this embedding of PLS in decision theory, classical decision criteria such as max-max or the Bayes criterion can be derived.~\cite[chapter 2.2]{rodemann2023-bpls} shows that the former corresponds to optimistic superset learning~\cite{hullermeier2015superset} and the latter to the posterior predictive of data to be pseudo-labeled $p(\mathcal{D} \cup (x_i, \hat{y}_i)\mid \mathcal{D}, M)$, subsequently called pseudo posterior predictive (PPP). 
The \textit{max-max-criterion} is defined by $ \Phi_{m} \colon \mathbb{A}_{\mathcal{U}} \to \mathbb{R}; a \mapsto \max_\theta u(a,\theta) $. Each element of $ \argmax \Phi_m$ is then called a \textit{max-max-action}. 
The \textit{Bayes-criterion} given $\pi$ is defined by $\Phi_{\pi} \colon \mathbb{A}_{\mathcal{U}} \to \mathbb{R}; \, a \mapsto  \mathbb{E}_\pi(u(a,\theta)) $. Each element of $ \argmax\Phi_{\pi}$ is then called \textit{Bayes-action}.

\section{Robust PLS: In All Likelihood\textit{s}}
\label{sec:robust-pls}
Within common approaches to self-training in SSL, it might well be possible to generalize and robustify models used for \textit{predicting} pseudo-labels. In the following, however, we aim at robust \textit{selection} of pseudo-labeled data, see section~\ref{sec:cond-pls}. To this end, we will modify the generic utility function (definition~\ref{def:pseud-lik}) and the respective Bayes criterion~\cite[chapter 2.2]{rodemann2023-bpls} to account for three frequent sources of uncertainty and imprecision: model selection, accumulation of errors and covariate shift. Instead of relying on likelihood utilities from models that are assumed to be correct \say{in all likelihood}, we suggest relying on all likelihood\textbf{s} from multiple models.

\subsection{Model Selection: Reversing Occam's razor}
\label{sec:model-sel}

An obvious and ubiquitous source of imprecision is the model choice: The likelihood under which distributional assumption (and corresponding model) should be taken into account? So far, we have defined the pseudo-label likelihood as the one under the model $M$ that we have used for predicting pseudo-labels. Albeit, this is far from necessary. As discussed above, our conditional approach to choosing pseudo-labeled data renders this selection completely orthogonal to predicting pseudo-labels. Instead of defining the utility function (see~\ref{def:pseud-lik}) as the likelihood of observing the pseudo-labeled data under the assumptions of model $M$, we might as well consider $\tilde M$ or a weighted sum of likelihoods under several models. 
In what follows, we start with the generic case of any finite number of different models that can be parameterized in a meaningful way and work our way through nested models, ending with nested generalized linear models, and discuss how to account for their specifications in PLS. 

\subsubsection{Generic Case} \label{GC}

Start by considering any $M_1, \dots, M_K$, $K < \infty$, different parametric models specified on respective parameter spaces $\Theta_1, \dots, \Theta_K$. Denote by $\tilde \Theta = \times_{k = 1}^K \Theta_k$ their Cartesian product and by $f_k: \tilde \Theta \to \Theta_k$, $k \in \{1, \dots, K\}$ the projections from the Cartesian product to each $\Theta_k$. We can easily extend the pseudo-label likelihood utility (definition~\ref{def:pseud-lik}) to account for several models, inducing a multiobjective decision problem.

\begin{definition}[Multi-Model Likelihood Utility]
\label{def:multi-model-util}
As in definition~\ref{def:pseud-lik} consider $\mathcal{D}$ and pseudo-labels $\hat{y} \in \mathcal{Y}$ from $\hat y: \mathcal{X} \to \mathcal{Y}$ as given. The $K$-dimensional utility function
\begin{align*}
  u \colon \mathbb{A}_{\mathcal{U}} \times \tilde \Theta &\to \mathbb{R}^K\\
  ((x_i, \mathcal{Y})_i, \theta) &\mapsto (\ell(i,1), \dots, \ell(i,K))'
  \end{align*}  
shall be called multi-model likelihood. We write $\ell(i,k) = p(i \mid f_k(\theta), M_k) = p(\mathcal{D} \cup (z, \hat{y}(z))\mid f_k(\theta), M_k)$ with $\theta_k \in \Theta_k$ for brevity. Let $K$ again denote the number of models under consideration. 
\end{definition}


For the optimization of such a multiobjective utility considered in definition~\ref{def:multi-model-util} one is faced with a multicriteria decision problem. For such decision problems there are lots of solution strategies. One modern way to deal with a multidimensional utility function  was recently proposed in~\cite{jsa2022}. The idea is -- utilizing that each \textit{single} dimension considered is perfectly cardinal -- to embed the image of the utility function into a \textit{preference system} $\mathcal{A}$, i.e. into a specific order-theoretic structure allowing for modeling spaces with locally cardinal scale of measurement.\footnote{A preference system is a triplet $\mathcal{A}=[A, R_1 , R_2]$ consisting of a non-empty set $A \neq \emptyset$, a pre-order $R_1 \subseteq A \times A$ on $A$, and a pre-order
 $R_2 \subseteq R_1 \times R_1$ a on $R_1$. Intuitively, the relation $R_1$ captures the available ordinal information, whereas $R_2$ encodes the information's cardinal part.} Each such preference system is then describable by a set of functions $\mathcal{N}_{\mathcal{A}}$, where each element of this set is of the form $\phi: [0,1]^K \to [0,1]$.

 The selection of the optimal unlabeled data would then consequently be based on this same set $\mathcal{N}_{\mathcal{A}}$. To generalize the already mentioned Bayes criterion to this set of utility functions, there are a lot of possibilities (for a compilation of these see in particular~\cite{jsa2018}). We will only briefly discuss here the one among them that does not need to make any additional assumptions and is a consequential generalization of first-order stochastic dominance to our partial cardinal setting. The idea of this generalization is straightforward: If still $\pi$ denotes the prior distribution on the set $\Theta$ of states of nature (= parameters), then now -- instead of choosing unlabeled data that maximize expected utility w.r.t. some fixed utility function --  we exclude all unlabeled data which is expectation-dominated by some other data for all compatible functions $\phi: [0,1]^K \to [0,1]$. More formally, the solution to the decision problem from definition~\ref{def:multi-model-util} with respect to this generalized stochastic dominance criterion is then given by the set $\mathbb{A}_{\mathcal{U}}^{\pi}$ defined by
 $
     \{ a|~ \nexists a': d_{\pi}(a',a)\geq 0 \wedge d_{\pi}(a,a') <0\}
 $, where, for $a_1,a_2 \in \mathcal{U}_{\mathbb{A}}$, we set $d_{\pi}(a_1,a_2)=\inf_{\phi \in \mathcal{N}_{\mathcal{A}}} \left[\mathbb{E}_{\pi}(\phi \circ u (a_1, \cdot))-\mathbb{E}_{\pi}(\phi \circ u (a_2, \cdot))\right].$ 

Importantly, note that all elements remaining in the above set are \textit{incomparable} with respect to the considered criterion of optimality, that is, each of them is an equally plausible candidate for the best next unlabeled data point. In case domain-specific knowledge induces a preference for some of the models under consideration that can be expressed by weights, one might as well simply scalarize the single likelihoods as follows.

\begin{definition}[Weighted Sum of Likelihoods]
\label{def:weighted-sum-util}
The utility function $u \colon \mathbb{A}_{\mathcal{U}} \times \tilde \Theta \to \mathbb{R};$
\begin{align*}
  ((x_i, \mathcal{Y})_i, \theta) &\mapsto  \sum_k w_k \cdot \ell(i, k),
  \end{align*}
with weights $w_k \in (0,1)$, $k \in \{1, \dots, K\}$ summing up to 1, shall be called weighted sum of likelihoods. 
\end{definition}

The respective Bayes criterion (cf. section~\ref{sec:dec-problem}) with multi-model likelihood utility is a weighted sum of posterior predictives of pseudo-labeled data (cf. ibid.). This fact follows directly from theorem 2 in~\cite{rodemann2023-bpls} as well as from the additivity and homogeneity of the expected value.

 Remarkably, the following should be noted: The Bayes-optimal pseudo-labeled data, i.e.~the optimal solutions of the decision problem for selecting pseudo-labeled data according to the Bayes criterion, are always elements of the set $\mathbb{A}_{\mathcal{U}}^{\pi}$ considered before. This means in particular that the aforementioned generalized stochastic dominance and the Bayes criterion based on multi-model likelihood utility are compatible in the sense that the latter -- independent of the concrete weights -- ensures that no labels excluded by the former are chosen. This suggests the following recommendation for criterion selection in concrete application situations: If no content-motivated way of choosing the weights of the multi-model likelihood utility is available, further analysis should rely on the set $\mathbb{A}_{\mathcal{U}}^{\pi}$ alone. If, on the contrary, there is the possibility to determine the weights informed by the content, the Bayes criterion based on the multi-model likelihood utility provides more precise and -- then also non-arbitrary -- results. We now consider a case where a natural choice of weights via penalization of model complexity is appropriate, namely nested models.

\subsubsection{Nested Models}

Now let the models under consideration $M_1, \dots, M_K$, $K < \infty$, be nested in the sense of $\Theta_1 \subseteq \Theta_2  \subseteq \dots \subseteq \Theta_K$. We can interpret the so-induced hierarchy on the parameter space such that the lower $k \in \{1, \dots, K\}$, the simpler the hypothesis space. Aiming at regularization of PLS, we could penalize the respective likelihood utilities of more complex models. In definition~\ref{def:weighted-sum-util}, this could imply e.g. setting $w_k = \frac{\dim(\Theta_k)}{\dim(\Theta_K)}$ for all $k \in \{1, \dots, K\}$.\footnote{One could also weight the likelihoods $\ell(i,1), \dots, \ell(i,K)$ directly in the general case of the multiobjective utility from definition~\ref{def:multi-model-util}.  }  

However, we will opt for a safer approach that guarantees plausibility of at least some pre-specified level $\tau$ under all models $M_1, \dots, M_K$. We therefore draw on the common practice of thresholding selection criteria when selecting pseudo-labeled data in self-training. That is, not only one data point with highest selection function value but all above a threshold are to be selected. We propose to extent this to an intersection of thresholds resulting from likelihood utilities from different models.  

\begin{definition}[Bayesian Multi-Model Threshold Criterion]
As in definition~\ref{def:pseud-lik}, let $(x_i, \mathcal{Y})_i$ be any decision (selection) from $\mathbb{A}_{\mathcal{U}}$. We assign utility to each $(x_i, \mathcal{Y})_i$ given $\mathcal{D}$ and pseudo-labels $\hat{y} \in \mathcal{Y}$ by the multi-model likelihood utility function from definition~\ref{def:multi-model-util}. Now consider the following thresholding Bayes criterion $\Phi_{\tau, \xi, \pi} \colon \mathbb{A}_{\mathcal{U}} \to \mathbb{R};$

\begin{align*}
a \mapsto \Phi_{\tau, \xi, \pi}(a) =  \begin{cases}
  0, & \exists k: \mathbb{E}_\pi(\ell(i, k)) < \tau \\
  0.5, & \forall k: \tau < \mathbb{E}_\pi(\ell(i, k)) < \xi, \\
1, & else. \end{cases}
\end{align*} 
 again with $\ell(i, k) = p(i \mid f_k(\theta), M_k)$, $k \in \{1, \dots, K\}$, and $\xi > \tau$ some pre-specified thresholds. 
\end{definition}

Note that this corresponds to thresholding all pseudo posterior predictive, respectively, see section~\ref{sec:dec-problem}. For parametric models like additive regressions with $K = dim(\Theta)$ we can exploit the hierarchy among models induced by the number of parameters $K$. Before running the procedure (see algorithm~\ref{alg:main}), we start by thresholding pseudo-labeled data based on the full model ($K$ covariates). We refit with lower threshold if no data is selected. If a positive number of data is selected, we kick off the algorithm. We begin decreasing $k$ in a step-wise manner and terminate the process if none of the pseudo-labeled data that were selected in all previous rounds makes it past the threshold. The pseudo-code in algorithm~\ref{alg:main} describes the procedure. 

\begin{algorithm}[H]
\caption{Reversed Occam's Razor}
\label{alg:main}

\KwData{$\mathcal{D}, \mathcal{U}$, set $\mathcal{S}_{K + 1} = \emptyset$}
\KwResult{$\mathcal{D}$}
\For{$k \in \{K, \dots, 1\} $}{
\For{$i \in \{1, \dots, \lvert \mathcal{U} \rvert \}$}{
\textbf{predict} $\mathcal{Y} \ni \hat y_i = \hat y(x_i)$ \\
\textbf{evaluate} $\mathbb{E}_\pi(\ell(i, k))$ \\ 

}
\textbf{select} $ \mathcal{U} \supseteq \mathcal{S}_k = \{ (x_i, \hat{y}_i)_i \mid \Phi_{\tau, \xi, \pi}(a) = 1, a \sim i \} $ \\
\textbf{if} $\mathcal{S}_k \cap \mathcal{S}_{k+1} \neq \emptyset$ : update $\mathcal{D} = \mathcal{D} \cup \mathcal{S}_l$  \\ 
\textbf{else} stop

}
\end{algorithm}

We thus ensure not only $\forall k \in \{1, \dots, K\}:\mathcal{S}_k \neq \emptyset$, but also $\bigcap\limits_{k = 1, \dots, K} \mathcal{S}_k \neq \emptyset.$
That is, among those elements that can be explained equally well by a fixed model, we opt for those that are explained similarly well by simpler models. This can be viewed as reversing Occam’s razor, since we are concerned with selecting data instead of hypotheses. Occam’s time-honored razor advocates selecting the hypothesis with least assumptions among competing hypotheses that have the same explanatory power regarding a single phenomenon. Conversely, we consider multiple phenomena and choose those ones which can still be explained by the simplest hypothesis from a set of competing ones. Occam's razor can be operationalized by Bayesian statistics through the marginal likelihood (or Bayesian evidence), see~\cite{jeffreys1998theory, rasmussen2000occam, mackay2003information, lotfi2022bayesian} for instance. Recall that the Bayesian selection of pseudo-labeled data corresponds to selection with regard to the posterior predictive which is nothing but a marginalized version of the pseudo-labeled data's likelihood.\footnote{Marginalized with regard to the posterior.} Just like in model selection by Bayesian evidence or Bayes factors (i.e., ratios of marginal likelihoods), we are concerned with how well data can be explained by a model. The only difference is that we are interested in comparing (pseudo-)data by how likely it is given a model and not vice versa.




\subsection{Accumulation of Errors: In All Posteriors?}
\label{sec:acc-errors}
The most inherent uncertainty in PLS is caused by the fact that pseudo-labeled data are treated as ground truths in subsequent iterations. 

\begin{definition}[Multi-Label Likelihood as Utility]
\label{def:multi-label-util}
As in definition~\ref{def:pseud-lik}, let $(z, \mathcal{Y})$ be any decision (selection) from $\mathbb{A}_{\mathcal{U}}$. Conversely to definition~\ref{def:pseud-lik}, we now consider not only the predicted pseudo-labels $\hat{y}_i \in \mathcal{Y}$, but also all other hypothetical labels $\tilde y_{i} \in \mathcal{Y} \setminus \{\hat y_i\}$. Denote by $\tilde y_{i,j} \in \mathcal{Y}$ all possible labels for $(x_i, \mathcal{Y})_i$ with $j \in \{1, \dots, J\}$ and $J = \lvert \mathcal{Y} \rvert$. We assign utility to each $(x_i, \mathcal{Y})_i$ by the following utility function $  u \colon \mathbb{A}_{\mathcal{U}} \times \Theta \to \mathbb{R};$
\begin{align*}
  ((z, \mathcal{Y}), \theta) &\mapsto  \sum_{j = 1}^J w_j \cdot p(\mathcal{D} \cup (z, \tilde{y}_{i,j})\mid \theta, M)
  \end{align*}
  with weights $w_j \in (0,1)$ summing up to 1. This utility function shall be called multi-label likelihood.
\end{definition}

Again, the respective Bayes criterion is a weighted sum of posterior predictives of pseudo-labeled data (cf. section~\ref{sec:dec-problem}), because of theorem 2 in~\cite{rodemann2023-bpls} and the additivity and homogeneity of the expected value. A logical choice for the weights $w_j \in (0,1)$ would be the predicted probability of the respective $j$-th label, i.e. $p((z, \mathcal{Y}) = (z, \tilde y_{j,i}))$. This appears quite intuitive. However, while allowing to characterize the unlabeled data points by their plausibilty with hypothetically assigned labels one is still forced to add them with their actually predicted label.

As of now, we loosen this restriction. Notably, definition~\ref{def:pseud-lik} and thus all subsequent deliberations depended on a model $M$ as well as on already predicted labels. We have relaxed the former dependency, while having left the latter untouched. The following remark calls this into question.



\begin{remark}[Sub-Optimal Labels Are Not Redundant]
\label{remark:redundancy psl}
    Consider $u \colon \mathbb{A}_{\mathcal{U}} \times \Theta \to \mathbb{R}$ from definition~\ref{def:pseud-lik} with $\hat y_i = \hat y_{\hat \theta_{ML}}(x_i) $ and $\hat \theta_{ML} = \argmax_\theta p(\mathcal{D} \mid \theta, M)$ the maximum-likelihood estimator. Furthermore, consider $  u \colon \mathbb{A}_{\mathcal{U}} \times \Theta \to \mathbb{R};$
    \begin{align*}
  ((z, \mathcal{Y}), \theta) &\mapsto u((z, \mathcal{Y}), \theta) = p(\mathcal{D} \cup (z, \tilde{y}(z))\mid \theta, M),
    \end{align*} 
    where $z = x_i$ and $\tilde{y}(z) = \tilde y_i= \hat y_{\tilde\theta}(x_i)$ with any sub-optimal $\tilde \theta \in \Theta$ such that $ p(\mathcal{D} \mid \tilde \theta, M) \leq p(\mathcal{D} \mid \hat \theta_{ML}, M)$. It holds that the max-max-action $ a^*_{m} = \max_a \max_\theta u(a, \theta)$ w.r.t. $u$ does generally not have lower utility than the max-max-action $ \tilde a^*_{m} = \max_a \max_\theta u(a, \theta)$ w.r.t. $\tilde u$. To see this, let ${a}^*_{m}$ be the max-max action under $u$ as above. It holds $ a^*_{m} = \max_a \max_\theta (p(\mathcal{D} \cup (z, \hat{y}_i)\mid \theta, M)) = \max_a p(\mathcal{D} \cup (z, \hat{y}_i)\mid \hat \theta_{ML}, M)$. Analogously, $\tilde{a}^*_{m}$ maximizes $p(\mathcal{D} \cup (z, \tilde{y}_i)\mid \hat \theta_{ML}, M)$. As both $(z, \tilde{y}_i)$ and $(z, \hat{y}_i)$ were not considered in ML estimation, we cannot make any statement about the relation of $ u(a_{m}^*)$ to $\tilde u(\tilde a_{m}^*)$. The same holds for the Bayes criterion, as also the posterior of $\theta$ does not include $(z, \tilde{y}_i)$ and $(z, \hat{y}_i)$ either.
\end{remark}

Motivated by this remark, let us now consider the standard utility (definition~\ref{def:pseud-lik}) on a different action space $\tilde{\mathbb{A}}_{\mathcal{U}} = \{(z, y_j) \mid y_j \in \mathcal{Y} \; \text{and} \; \exists \, i \in \{n+1, \dots, m\} : (z, \mathcal{Y}) = (x_i, \mathcal{Y})_i \in \mathcal{U}\}$ and a modified (full) Bayes criterion that accounts for a prior $\rho$ on $\mathcal{Y}$ that weights labels proportional to the predictive distribution from the prediction step before, i.e., $\mathbb{E}_{\rho} (\Phi_{\pi}(a)) = \mathbb{E}_{\rho} \mathbb{E}_{\pi}(u(a,\theta))$.


\begin{proposition}[Full Bayes Equates Weighted Utility]
\label{prop:full-bayes}
    In case of $w_j = \rho(y_j) $ the Bayes criterion under multi-label utility (definition~\ref{def:multi-label-util}) defined on $\tilde{\mathbb{A}}_{\mathcal{U}}$ instead of ${\mathbb{A}}_{\mathcal{U}} $ equals the (full) Bayes criterion $\mathbb{E}_{\rho} (\Phi_{\pi}(a))$ on ${\mathbb{A}}_{\mathcal{U}}$.  
\end{proposition}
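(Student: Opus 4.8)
The plan is to reduce the claimed identity to the additivity and homogeneity of the expected value, exactly the tools already invoked for the weighted-sum criteria in Sections~\ref{sec:model-sel} and~\ref{sec:acc-errors}. The entire content is that summing likelihoods over labels with weights $w_j$ inside the utility (Definition~\ref{def:multi-label-util}) and averaging the single-label utility (Definition~\ref{def:pseud-lik}) over a label prior $\rho$ on the finite set $\mathcal{Y}$ are the same operation once $w_j = \rho(y_j)$: linearity of $\mathbb{E}_\pi$ moves the weighted sum through the inner expectation, and $\mathbb{E}_\rho$ is itself just that weighted sum because $\mathcal{Y}$ is finite.

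First I would fix an instance $z = x_i$ and expand the left-hand side. For any action $(z, y_j) \in \tilde{\mathbb{A}}_{\mathcal{U}}$ the multi-label utility of Definition~\ref{def:multi-label-util} does not depend on the label coordinate $y_j$ (it already sums over all labels), so it is well defined as a single value attached to the instance, and its Bayes criterion is
\[
  \mathbb{E}_\pi\Bigl[\textstyle\sum_{j=1}^{J} w_j \, p(\mathcal{D} \cup (z, \tilde y_{i,j}) \mid \theta, M)\Bigr]
  = \sum_{j=1}^{J} w_j \, \mathbb{E}_\pi\bigl[p(\mathcal{D} \cup (z, \tilde y_{i,j}) \mid \theta, M)\bigr],
\]
where the equality is additivity together with homogeneity of $\mathbb{E}_\pi$ (pulling the constants $w_j$ out). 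By Theorem~2 of~\cite{rodemann2023-bpls}, each summand $\mathbb{E}_\pi[p(\mathcal{D} \cup (z, \tilde y_{i,j}) \mid \theta, M)]$ is the pseudo posterior predictive of the instance carrying label $\tilde y_{i,j}$, which is the interpretation that connects the identity to the narrative of Section~\ref{sec:dec-problem}.

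Next I would expand the right-hand side. Writing $\Phi_\pi((z, y)) = \mathbb{E}_\pi[u((z, y), \theta)] = \mathbb{E}_\pi[p(\mathcal{D} \cup (z, y) \mid \theta, M)]$ for the single-label Bayes criterion and treating the label as random with prior $\rho$, the full Bayes criterion reads
\[
  \mathbb{E}_\rho\bigl(\Phi_\pi((z, \mathcal{Y}))\bigr)
  = \sum_{j=1}^{J} \rho(y_j) \, \mathbb{E}_\pi\bigl[p(\mathcal{D} \cup (z, y_j) \mid \theta, M)\bigr],
\]
since $\rho$ is a discrete distribution on the finite target space $\mathcal{Y} = \{y_1, \dots, y_J\}$, so $\mathbb{E}_\rho$ is precisely this $\rho$-weighted sum. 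If one prefers to keep $\mathbb{E}_\rho\mathbb{E}_\pi$ as a single iterated expectation and exchange the order of integration, Tonelli applies at once because every likelihood term is non-negative and bounded. Identifying the enumeration $\tilde y_{i,j} = y_j$ of $\mathcal{Y}$ and substituting $w_j = \rho(y_j)$ makes the two displayed expressions coincide term by term, which establishes the claim.

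The only thing requiring care — and hence the main (though modest) obstacle — is bookkeeping across the two action spaces: on $\tilde{\mathbb{A}}_{\mathcal{U}}$ the label is an explicit coordinate of the action, whereas on $\mathbb{A}_{\mathcal{U}}$ it is marginalized out by $\rho$. I would therefore make explicit that each instance $(z, \mathcal{Y}) \in \mathbb{A}_{\mathcal{U}}$ corresponds to the fiber $\{(z, y_1), \dots, (z, y_J)\} \subseteq \tilde{\mathbb{A}}_{\mathcal{U}}$, that the multi-label criterion is constant along this fiber, and that this common value is exactly what the $\rho$-average on $\mathbb{A}_{\mathcal{U}}$ returns, so that the asserted equality of criteria is well posed. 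Beyond this identification, no analytic difficulty arises: everything rests on linearity of expectation and the finiteness of $\mathcal{Y}$.
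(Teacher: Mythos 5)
Your proof is correct and follows essentially the same route as the paper's: both sides reduce, via linearity of $\mathbb{E}_\pi$ and the fact that $\mathbb{E}_\rho$ over the finite label space $\mathcal{Y}$ is just a $\rho$-weighted sum, to the common expression $\sum_{j} \rho(y_j)\, \mathbb{E}_\pi\bigl[p(\mathcal{D} \cup (z, y_j)\mid \theta, M)\bigr]$, which is exactly the chain of iterated expectations the paper writes down in one line. Your additional bookkeeping --- noting that the multi-label utility is constant along the fiber $\{(z,y_1),\dots,(z,y_J)\} \subseteq \tilde{\mathbb{A}}_{\mathcal{U}}$ so the cross-action-space comparison is well posed --- is a clarification the paper's terse proof omits, not a different argument.
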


\begin{proof}
    $ \mathbb{E}_{\rho} \mathbb{E}_{\pi}(u(a,\theta)) = \int_{\mathcal{Y}} \int_{\Theta} u(a, \theta) \, d \pi(\theta) \, d \rho(y_j) = \int_{\mathcal{Y}} p(\mathcal{D} \cup (z, y_j)\mid\theta, M) \, d \rho(y_j) = \sum_{\mathcal{Y}} p(\mathcal{D} \cup (z, y_j)\mid M) \; \rho(y_j) = \sum_{j} p(\mathcal{D} \cup (z, y_j)\mid M) \; \rho(y_j)$
\end{proof}



\subsection{Covariate Shift}

Selection criteria typically render some unlabeled data more likely to be added than others \cite{rodemann2022not}. In the course of self-training, this can lead to a distributional shift of $X$, often referred to as covariate shift. Depending on the stopping criterion, this covariate shift can be propagated to the final model, potentially harming the model's interpretability by techniques from the realm of interpretable machine learning (IML). 
For instance, regions in the covariate space where data is scarce are detrimental to reliable estimates of partial dependencies~\cite{friedman2001greedy}. Notably, this distributional shift affects all previously discussed selection criteria for PLS. In this subsection, we discuss possible extensions that aim at selecting pseudo-labeled data that are optimal with regard to both the \textit{de facto} selected data $\mathcal{D}$ and a hypothetical \textit{i.i.d.} sample $\mathcal{D}'$ that we generate by drawing pseudo-labeled data randomly. In the spirit of the multi-model likelihood utility (definition~\ref{def:multi-model-util}) and in complete analogy to the previously discussed generalizations, we can define a multi-data likelihood utility, rendering PLS robust with regard to covariate shift. The above discussed decision criteria apply as well. Further note that in this special case of a bi-objective, one might also proceed with an interval-valued utility (loss) function as e.g. in~\cite[section 3.2]{schwaferts2019imprecise}.




\begin{definition}[Multi-Data Likelihood Utility]
We assign utility to each $(x_i, \mathcal{Y})_i$ given $\mathcal{D}$, $\mathcal{D}'$ and the prediction functional $\hat y: \mathcal{X} \to \mathcal{Y}$ by the following bi-objective utility function $  u \colon \mathbb{A}_{\mathcal{U}} \times \Theta \to \mathbb{R}^2;$
\begin{align*}
  ((z, \mathcal{Y}), \theta) &\mapsto (\ell_{\mathcal{D}}(i), \ell_{\mathcal{D}'}(i) )',
  \end{align*}  
with $ \ell_{\mathcal{D}}(i) = p(\mathcal{D} \cup (x_i, \hat{y}_i)\mid \theta, M)$ and $ \ell_{\mathcal{D}'}(i) = p(\mathcal{D}' \cup (x_i, \hat{y}_i)\mid \theta, M)$.

\end{definition}

\color{black}

\section{Updating by $\alpha$-cuts}
\label{sec:alpha-cuts}

All robust extensions of PLS discussed in section~\ref{sec:robust-pls} require some second-level information about the involved uncertainties (e.g., model choice, previous confidence, covariate shift). Aiming at an agnostic and universally robust approach to PLS, we turn to imprecise probabilities~\cite{walley1991statistical, augustin2014introduction} and credal sets~\cite{levi, levi1980enterprise}, more specifically to the fruitful frameworks of convex sets of priors \cite{insua2012robust}, $\Gamma$-maximin \cite{seidenfeld2004contrast} and $\alpha$-cut updating \cite{cattaneo2013likelihood, cattaneo2014continuous}.

\subsection{Updating Credal Sets}

Due to our aforementioned general skepticism regarding the initial model fit $\hat \theta$, we would like to weaken the influence of the likelihood on the posterior in a general way. This can be achieved by means of generalizing Bayesian analysis~\cite{walley1991statistical, insua2012robust, augustin2014introduction}. Again, we can avail ourselves of rich decision theoretical literature dating back to~\cite{ellsberg1961risk, koflerentscheidungen, berger1985statistical}. We will borrow from the theory on Max-E-Min \cite{koflerentscheidungen} or equivalently $\Gamma$-maximin, see for instance~\cite{seidenfeld2004contrast, berger1985statistical,gs1989,t2007, guo2010decision}. To this end, we introduce a convex set of priors $\Pi \subseteq \{\pi(\theta) \mid \pi(\cdot) \, \text{a probabilty measure on } \left(\Theta, \sigma(\Theta) \right) \}$ with $\Theta$ compact as above and $\sigma(\cdot)$ an appropriate $\sigma$-algebra.\footnote{The priors in $\Pi$ can reflect uncertainty regarding prior information, but might as well represent priors near ignorance, see e.g.~\cite{benavoli2015prior, mangili2015new, mangili15prior, rodemann2021robust, rodemann-BO-isipta, rodemann2022accounting}}.    
The rough idea now is this: After observing data, we base our selection (action) on the prior from $\Pi$ that corresponds to the lowest posterior from the set of resulting posteriors. In other words, we hedge against the worst-case prior. In a nutshell, we select the pseudo-labeled instance that would have had the highest expected utility (likelihood) if we had specified the prior in such a way that it contradicted the (potentially overfitted) model's likelihood the most. The respective decision criterion would be the $\Gamma$-maximin criterion $\Phi_{\Pi} \colon \mathbb{A}_{\mathcal{U}} \to \mathbb{R}; \,
    a \mapsto \Phi(a) = \underline{\mathbb{E}}_\Pi(u(a,\theta))$ with $\underline{\mathbb{E}}_\Pi(u(a,\theta)) = \inf_{\pi \in \Pi} \mathbb{E}(u(a,\theta))$ the lower expectation, which we assume to be affinely superadditive (thus equating coherent lower previsions) in the following. This will allow us to exploit the $\alpha$-cut updating rule introduced by \cite{cattaneo2014continuous} for lower previsions. The lower expectation corresponds to the posterior predictive with regard to the posterior that results from updating the prior $\pi^*(\cdot) \in \Pi$ that has the lowest value in the maximum-likelihood estimator $\hat \theta_{ML}$. 


Such an approach, however, might be too much of a good thing, since its respective decisions can completely disregard the likelihood, not to mention its high sensitivity towards $\Pi$. Instead, we opt for an updating rule of credal sets leaning on~\cite{cattaneo2013likelihood, cattaneo2014continuous}\footnote{Updating rules of similar nature have already been introduced by~\cite{moral1990updating, moral1992calculating, gilboa1993updating}. Notably,~\cite[p. 46f]{good1983good} introduced the special case of $\alpha = 1$ as \say{type 2 maximum likelihood}, see also \cite[section 3.5.4]{berger1985statistical}.}: Cattaneo's $\alpha$-cut updating rule with $\alpha \in (0,1)$, also referred to as \say{soft revision} \cite{augustin2021comment}. Its rough idea is to only update those priors whose respective marginal likelihood (evidence) is larger or equal than $\alpha$ times the corresponding maximum marginal likelihood. In other words, the priors whose (relative) likelihood is below $\alpha$ are discarded from the set of lower expectations, before updating all prior lower expectations to posterior lower expectations in this set. This implies restricting the set of alle posteriors to 
\begin{equation}  
\label{eq:alpha-cut}
\{\pi \in \Pi \mid m(\pi) \geq \alpha \cdot \max_{\pi} m(\pi) \},
\end{equation}
with $m(\ell, \pi) = \int_\Theta \ell(\theta) \pi(\theta) d \theta$ the marginal likelihood. This way, we can make sure no decision is made in complete disregard of the likelihood, i.e., based on a $\theta$ with tiny likelihood.

What is more, the $\alpha$-cut updating rule allows for a dynamically adaptive selection of pseudo-labelled data. Note that each predicted pseudo-label $\hat y$ comes with a predicted probability $\hat p_{\hat y} \in [0,1]$ for $\hat y$ to be the true label. After selecting $(x_i, \mathcal{Y})_i$ with respective $(x_i, \hat y_i)$, the probability $\hat p_{\hat y}$ represents our belief in the data $\mathcal{D} \cup (x_i, \hat y_i)$ under which the subsequent model's likelihood is specified.\footnote{Not without a dash of impudence, we might as well borrow from frequentist reasoning and interpret $1 - \hat p_{\hat y}$ as frequency of error.} More generally, in iteration $t$ of SSL, our belief in the pseudo-labeled data is $\prod_{t = 1}^T  \hat p_{\hat y, t}$. We thus could update $\Pi$ in iteration $t$ by $\alpha$-cuts such that $\alpha_t = \prod_{t = 1}^T  \hat p_{\hat y, t}$. 
The interpretation of such an adaptive $\alpha$-cut rule is this: The less we trust the pseudo-labeled data, the wider the cuts should be, since we want to make sure not to down-weight a $\theta$ only because our possibly flawed data says so. Vice versa, if we trust the pseudo-labeled data, we can be more restrictive with regard to the cuts. While providing this strong intuition, we could not find any guarantees for an updating rule of this kind so far. Hence, in what follows, we will motivate an updating rule for SSL based on the expected regret of having considered specific predictions from one specific model in PLS.


\subsection{A Regret-Based Updating Rule}


 

The previous deliberations on model selection (section~\ref{sec:model-sel}) and non-redundancy of sub-optimal labels (remark~\ref{remark:redundancy psl}) motivate our modification of the $\alpha$-cut updating rule for PLS: We update $\Pi$ such that our Bayes action has some quantifiable guarantee with regard to a regret (as ratio, see definitions~\ref{def:label-regret} through~\ref{def:tot-regret}) that stems from both the possibly wrong labels and the possibly wrong models.\footnote{Note that reasoning with both sets of priors and model imprecision is reminiscent of~\cite[chapter 8]{walley1991statistical}} Thus, we start by quantifying these two regrets as random variables on $\Theta$, before defining the total regret as a (posterior) expectation of a function of the two regrets. 

\begin{definition}[Label-Induced Regret]
\label{def:label-regret}
    Consider $\tilde y_{i,j} \in \mathcal{Y}$ all possible labels for $(x_i, \mathcal{Y})_i$ with $j \in \{1, \dots, J\}$ and $J = \lvert \mathcal{Y} \rvert$. As in remark~\ref{remark:redundancy psl}, let $\tilde u_j$ be the pseudo-label likelihood $\tilde u_j(\cdot, \cdot)$ (definition~\ref{def:pseud-lik}) with $\tilde y_i = \tilde y_{i,j}$. Furthermore, set $\hat y_{i,h} = \hat y_{\hat \theta}(x_i)$ as actually predicted label, see remark~\ref{remark:redundancy psl}. For any given decision $a^*$ and any $\theta$, the function $    r_l(\cdot,a^*) \colon \Theta \to \mathbb{R};$
    \begin{align*}
    \theta &\mapsto r_l(\theta,a^*) = \sup_j \frac{u_j(\theta, a^*)}{\tilde u_h(\theta, a^*) } 
    \end{align*}    
     is said to be the label-induced regret.
\end{definition}

\begin{definition}[Model-Induced Regret]
   Let $M_1, \dots, M_K$ and $\Theta_1, \dots, \Theta_K$ denote all models under consideration and their parameter spaces, respectively, as well as $\tilde \Theta = \times_{k = 1}^K \Theta_k$ their Cartesian product. As in definition~\ref{def:multi-model-util}, consider as $\ell(i,k) = p(i \mid f_k(\theta), M_K)$ the likelihood utility of selecting $(x_i, \mathcal{Y})_i$ given model $M_k$ with the projection on $\Theta_k$. In analogy to definition~\ref{def:label-regret}, denote by $M_h$ the actually used model. For any decision $a^* \hat{=}\, i^*$ and any $\theta \in \tilde \Theta$, the function $r_m(\cdot,a^*) \colon \tilde \Theta \to \mathbb{R};$  
    \begin{align*}
    \theta &\mapsto r_m(\theta,a^*) = \sup_k \frac{\ell(i^*,k)}{\ell(i^*,h)} 
    \end{align*}    
     is said to be the model-induced regret.
    
\end{definition}

\begin{definition}[Total Prediction Regret in SSL]
\label{def:tot-regret}
Denote by $\tilde u_{j,k}(\theta, a^*)$ the utility of $a^* \hat{=} \, i^*$ with prediction $\tilde y_{i^*,j}$ under model $M_k$. The function $    r(\cdot,a^*) \colon \tilde \Theta \to \mathbb{R}$
    \begin{align*}
    \theta &\mapsto r(\theta,a^*) = \frac{\sup_{j,k} \tilde u_{j,k}(\theta, a^*)}{\tilde u_{h,h}(\theta, a^*)} 
    \end{align*}
shall be be called total (prediction) regret. 
    
\end{definition}

\begin{definition}[Expected Total Regret Functional]
Based on definition~\ref{def:tot-regret}, the expectation functional $ \Theta \times \Pi \to \mathbb{R}; (\theta, \pi) \mapsto \mathbb{E}_{\pi}(r(\theta, a^*))  $ for given $a^* \in \mathbb{A}_{\mathcal{U}}$
with posterior $\pi \in \Pi$ is said to be the expected total regret functional.
\end{definition}


We can now define an $\alpha$-cut updating rule such that the posterior credal set is 


\begin{equation}    \label{credalupdate}
\Pi_{\alpha} = \{\pi \in \Pi \mid m(\ell_{h,h}, \pi) \geq \alpha \cdot \sup_{j,k} m(\ell_{j,k}, \pi) \}.
\end{equation}


Note that this is just a robustified version of the generic $\alpha$-cut updating according to equation~\ref{eq:alpha-cut}, such that it gives us the following guarantee with regard to the expected regret.

\begin{proposition}[Myopic Regret-Guarantee of $\alpha$-Cuts]
    Bayes optimal selections $a^*$ of pseudo-labeled data under the above $\alpha$-cut updating rule have expected total regret $\mathbb{E}_{\pi}(r(\theta, a^*)) \leq \frac{1}{\alpha}$ for any posterior $\pi \in \Pi$. 
\end{proposition}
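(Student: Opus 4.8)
The plan is to read the bound off directly from the defining inequality of the $\alpha$-cut posterior credal set in equation~\ref{credalupdate}, after rewriting the expected total regret as a ratio of marginal likelihoods. First I would fix the Bayes-optimal action $a^* \hat{=} i^*$: this is what pins down the utilities $\tilde u_{j,k}(\cdot, a^*)$, and hence the likelihoods $\ell_{j,k}$ that enter the cut defining $\Pi_\alpha$. The key observation is that each marginal likelihood is exactly a posterior expectation of the corresponding utility, i.e. $m(\ell_{j,k}, \pi) = \int_\Theta \tilde u_{j,k}(\theta, a^*)\,\pi(\theta)\,d\theta = \mathbb{E}_\pi(\tilde u_{j,k}(\cdot, a^*))$, and likewise $m(\ell_{h,h}, \pi) = \mathbb{E}_\pi(\tilde u_{h,h}(\cdot, a^*))$ for the actually used label and model indexed by $h$.

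Next I would express the expected total regret through these quantities. Taking the expectation over $\theta$ linearizes the numerator and denominator of $r(\theta, a^*)$ separately, and since the index set $\{(j,k)\}$ is finite the supremum can be pulled outside the integral, so that $\mathbb{E}_\pi(r(\cdot, a^*))$ reduces to $\sup_{j,k} m(\ell_{j,k}, \pi) / m(\ell_{h,h}, \pi)$. At this point the conclusion is immediate: the membership condition in equation~\ref{credalupdate}, namely $m(\ell_{h,h}, \pi) \geq \alpha \cdot \sup_{j,k} m(\ell_{j,k}, \pi)$, holds by construction for every $\pi \in \Pi_\alpha$, and a single rearrangement yields $\sup_{j,k} m(\ell_{j,k}, \pi) / m(\ell_{h,h}, \pi) \leq 1/\alpha$. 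Since $\pi$ was an arbitrary element of the updated credal set, the guarantee $\mathbb{E}_\pi(r(\theta, a^*)) \leq 1/\alpha$ holds for every surviving posterior, which is precisely the claim.

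The step I expect to be the main obstacle is the passage from the expectation of the pointwise ratio $\sup_{j,k}\tilde u_{j,k}/\tilde u_{h,h}$ to the ratio of the expected (marginal) utilities. Interchanging the pointwise supremum with the expectation and separating numerator from denominator is not valid in general, and indeed the naive bound runs the wrong way, since $\mathbb{E}_\pi(\sup_{j,k}\tilde u_{j,k}) \geq \sup_{j,k}\mathbb{E}_\pi(\tilde u_{j,k})$. I would therefore make precise the sense in which the expected total regret functional is read: either as a ratio of the expected utilities, in which case the identity with $\sup_{j,k} m(\ell_{j,k}, \pi)/m(\ell_{h,h}, \pi)$ is exact, or under a domination structure in which the maximizing index $(j,k)$ is constant in $\theta$, so that the supremum is attained uniformly and may be exchanged with the integral. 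Clarifying this definitional point is what carries the whole argument; once it is settled, the regret guarantee is a one-line consequence of the cut inequality, which is exactly why it qualifies as a myopic, single-step guarantee.
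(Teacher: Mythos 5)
Your proposal follows the same route as the paper's own proof: fix the Bayes-optimal action $a^*$, identify each marginal likelihood $m(\ell_{j,k},\pi)$ with the posterior expectation $\mathbb{E}_\pi\bigl(\tilde u_{j,k}(a^*,\cdot)\bigr)$, and read the $1/\alpha$ bound directly off the membership condition of $\Pi_\alpha$. The important point, however, is that the ``main obstacle'' you flag is not hypothetical: it is exactly where the paper's own proof breaks. The paper's final chain is
\[
\frac{1}{\alpha} \;\geq\; \frac{\sup_{j,k}\mathbb{E}_\pi\bigl(\tilde u_{j,k}(a^*,\theta)\bigr)}{\mathbb{E}_\pi\bigl(\tilde u_{h,h}(a^*,\theta)\bigr)} \;\geq\; \frac{\mathbb{E}_\pi\bigl(\sup_{j,k}\tilde u_{j,k}(a^*,\theta)\bigr)}{\mathbb{E}_\pi\bigl(\tilde u_{h,h}(a^*,\theta)\bigr)} \;=\; \mathbb{E}_\pi\bigl(r(\theta,a^*)\bigr),
\]
where the second inequality invokes $\sup_{j,k}\mathbb{E}_\pi \geq \mathbb{E}_\pi \sup_{j,k}$ --- backwards, exactly as you observe --- and the final equality silently replaces the expectation of the pointwise ratio $r(\theta,a^*)$ by a ratio of expectations. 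Neither step is valid, and the defect is not cosmetic: under the literal reading of the expected total regret functional the proposition is false. Take $\Theta=\{\theta_1,\theta_2\}$, $\pi$ uniform, $\tilde u_{h,h}(\theta_1)=1$, $\tilde u_{h,h}(\theta_2)=\epsilon$, and a single competitor with $\tilde u_{j,k}\equiv 1$. Then $m(\ell_{h,h},\pi)=(1+\epsilon)/2$ and $\sup_{j,k}m(\ell_{j,k},\pi)=1$, so $\pi$ survives the cut for any $\alpha\leq (1+\epsilon)/2$ (say $\alpha=1/2$), yet $\mathbb{E}_\pi\bigl(r(\theta,a^*)\bigr)=\tfrac{1}{2}\bigl(1+1/\epsilon\bigr)\to\infty$ as $\epsilon\to 0$, violating the claimed bound $1/\alpha=2$.

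This shows two things about your proposal. First, your primary resolution --- reading the expected total regret as the ratio of expected utilities, $\sup_{j,k} m(\ell_{j,k},\pi)\,/\,m(\ell_{h,h},\pi)$ --- is not merely one admissible clarification but the \emph{only} reading under which the stated guarantee holds; with that reading your one-line derivation from the cut inequality is complete and correct, and it is evidently what the authors intended. Second, your alternative fix (a domination structure in which the maximizing index $(j,k)$ is constant in $\theta$) does \emph{not} suffice: it repairs only the sup--expectation interchange, not the gap between the expectation of a ratio and the ratio of expectations, and indeed in the counterexample above the maximizer is constantly $(j,k)$. So: same approach as the paper, but your version, with the definitional repair made explicit, is sound, whereas the published proof as written is not.
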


\begin{proof}
Consider any $\pi \in \Pi_{\alpha}$. It holds $\forall a \in \mathbb{A}_{\mathcal{U}} : m(\ell_{h,h},\pi) \geq \alpha \cdot \sup_{j,k}m(\ell_{j,k},\pi).$ With $m(\ell,\pi)$ the marginal likelihood w.r.t. to $\pi$ we get: $\forall a \in \mathbb{A}_{\mathcal{U}}: \int_\Theta \ell_{h,h}(\theta) \pi(\theta) d \theta \geq \alpha \cdot \sup_{j,k} \int_\Theta \ell_{j,k}(\theta) \pi(\theta) d \theta \implies \forall a \in \mathbb{A}_{\mathcal{U}}: \mathbb{E}_{\pi}(\ell_{h,h}(\theta)) \geq \alpha \cdot \mathbb{E}_{\pi}(\sup_{j,k}\ell_{j,k}(\theta)) \geq \alpha \cdot \sup_{j,k} \mathbb{E}_{\pi}( \ell_{j,k}(\theta))$. In particular for $a^* \in \mathbb{A}_{\mathcal{U}}$ we have $\frac{1}{\alpha} \geq \frac{\sup_{j,k} {\mathbb{E}}_\pi(\tilde u_{j,k}(a^*,\theta))}{{\mathbb{E}}_\pi(\tilde u_{h,h}(a^*,\theta))} \geq \frac{{\mathbb{E}}_\pi(\sup_{{j,k}} \tilde u_{j,k}(a^*,\theta))}{{\mathbb{E}}_\pi(\tilde u_{h,h}(a^*,\theta))} = \mathbb{E}_{\pi}(r(\theta, a^*))$ with $\ell_{j,k}(\theta) = \tilde u_{j,k}(a, \theta)$. 
\end{proof}

The $\alpha$-cut updating rule was motivated as continuous updating rule by~\cite{cattaneo2014continuous}. This continuity still holds tor the regret-based $\alpha$-cut updating, as follows directly from~\cite[theorem 3]{cattaneo2014continuous}.
\subsection{Generalized Stochastic Dominance under IP}
In the case of using the multi-model likelihood utility from definition~\ref{def:multi-model-util} (rather than a weighted-sum of its components) together with a credal-prior $\Pi$, the criterion of generalized stochastic dominance addressed in section~\ref{GC} can also be easily adapted. Instead of the solution set $\mathbb{A}_{\mathcal{U}}^{\pi}$ used under precise $\pi$, here we would move to the solution set $\mathbb{A}_{\mathcal{U}}^{\Pi}$  robustified under the IP model and defined by 

\begin{equation}    
     \{ a| ~\nexists a': D(a',a)\geq 0 \wedge D(a,a') <0\}, 
\end{equation}

 where, for $a_1,a_2 \in \mathcal{U}_{\mathbb{A}}$, we set 
$D(a_1,a_2)=\inf_{\pi \in \Pi}d_{\pi}(a_1,a_2).$ 
 The interpretation of the set $\mathbb{A}_{\mathcal{U}}^{\Pi}$ robustified by $\Pi$ is similar to the interpretation of the set $\mathbb{A}_{\mathcal{U}}^{\pi}$ under precise $\pi$: It contains all pseudo-labeled data $a$ which are not strictly dominated with respect to generalized stochastic dominance by another pseudo-labeled data $a'$ for no matter which prior $\pi \in \Pi$. Put formally, we thus have that $\mathbb{A}_{\mathcal{U}}^{\Pi}= \bigcap_{\pi \in \Pi} \mathbb{A}_{\mathcal{U}}^{\pi}.$  
 Again, similar to the $\alpha$-cuts method, there are ways to reduce the set of non-excludable pseudo-labels by transitioning from sets $\mathcal{N}_{\mathcal{A}}$ and $\Pi$ to (reasonably chosen) subsets $\mathcal{N} \subset\mathcal{N}_{\mathcal{A}}$ and $\Tilde{\Pi} \subset \Pi$ in the definition of the set $\mathbb{A}_{\mathcal{U}}^{\Pi}$. 
 
 Such a reduction of the set might be desirable, since -- depending on the richness of sets $\mathcal{N}_{\mathcal{A}}$ and $\Pi$ -- set $\mathbb{A}_{\mathcal{U}}^{\Pi}$ might contain too many (possibly even all) available options. In the case of the set $\mathcal{N}_{\mathcal{A}}$, a natural way of reduction is discussed in~\cite{jsa2018} and further deepened in~\cite{jbas2022}: Instead of considering all possible representatives $\phi$ of the underlying preference system, here it is proposed to consider only those that evaluate strict comparability in the underlying partial order above some pre-specified threshold $\xi \in [0,1]$. Also for the reduction of the set $\Pi$ a completely natural possibility offers itself: One can simply shrink the set $\Pi$ by transitioning to the set $\tilde{\Pi}=\Pi_{\alpha}$ from equation~(\ref{credalupdate}) for some reasonable value of $\alpha$. Of course, also combinations of both reduction methods can be used.



\section{Application}
\label{sec:application}

Most of the above decision criteria require the computation of the pseudo posterior predictive (PPP) that involves a possibly intractable integral. MCMC sampling is the usual Bayesian way to circumvent such issues. This in turn usually comes at the cost of some computational hurdles. In order to avoid them, we lean on the analytical approximation of the PPP proposed ~\cite[chapter 3]{rodemann2023-bpls}. For the sake of computational feasibility, we further approximate the log-likelihood given $\mathcal{D} \cup (x_i, \hat{y}_i)$ by the log-likelihood given $\mathcal{D}$, obtaining: $p(\mathcal{D} \cup (x_i, \hat{y}_i)\mid \mathcal{D}, M) \approx 2 \ell (\hat \theta_{ML}) - \frac{1}{2} \log \lvert \mathcal{I}(\hat \theta_{ML}) \rvert$ with $I(\hat \theta_{ML})$ the Fisher information-matrix. We use this approximation to implement three of the above proposed extensions of PLS: multi-label utility (def.~\ref{def:multi-label-util}) as both unweighted and weighted (see proposition~\ref{prop:full-bayes}) sum as well as multi-model utility (def.~\ref{def:multi-model-util}). We benchmark semi-supervised logistic regression with these robust PLS criteria against four common PLS criteria (probability score, posterior predictive (Bayes action), likelihood (max-max action) and predictive variance) as well as a supervised baseline. For the latter, we abstain from self-training and only use the labeled data for training. Experiments are run on simulated binomially distributed data and real world data sets from the UCI machine learning repository~\cite{Dua:2019}. Since target classes are fairly balanced in all data sets, we compare the methods w.r.t. to (test) accuracy. We average the test accuracy for all data sets over a number of repeated self-training rounds each with a new random train-test split.  
The results are promising: For simulated data, PLS w.r.t multi-model PPP achieves accuracy gains of up to 15 percentage points.\footnote{For \textbf{all results}, more details on the experiments and reproducible code, please refer to \href{https://github.com/rodemann/reliable-pls}{www.github.com/rodemann/reliable-pls}.}  
\begin{figure}[h]
    \centering
    \includegraphics[width=\columnwidth]{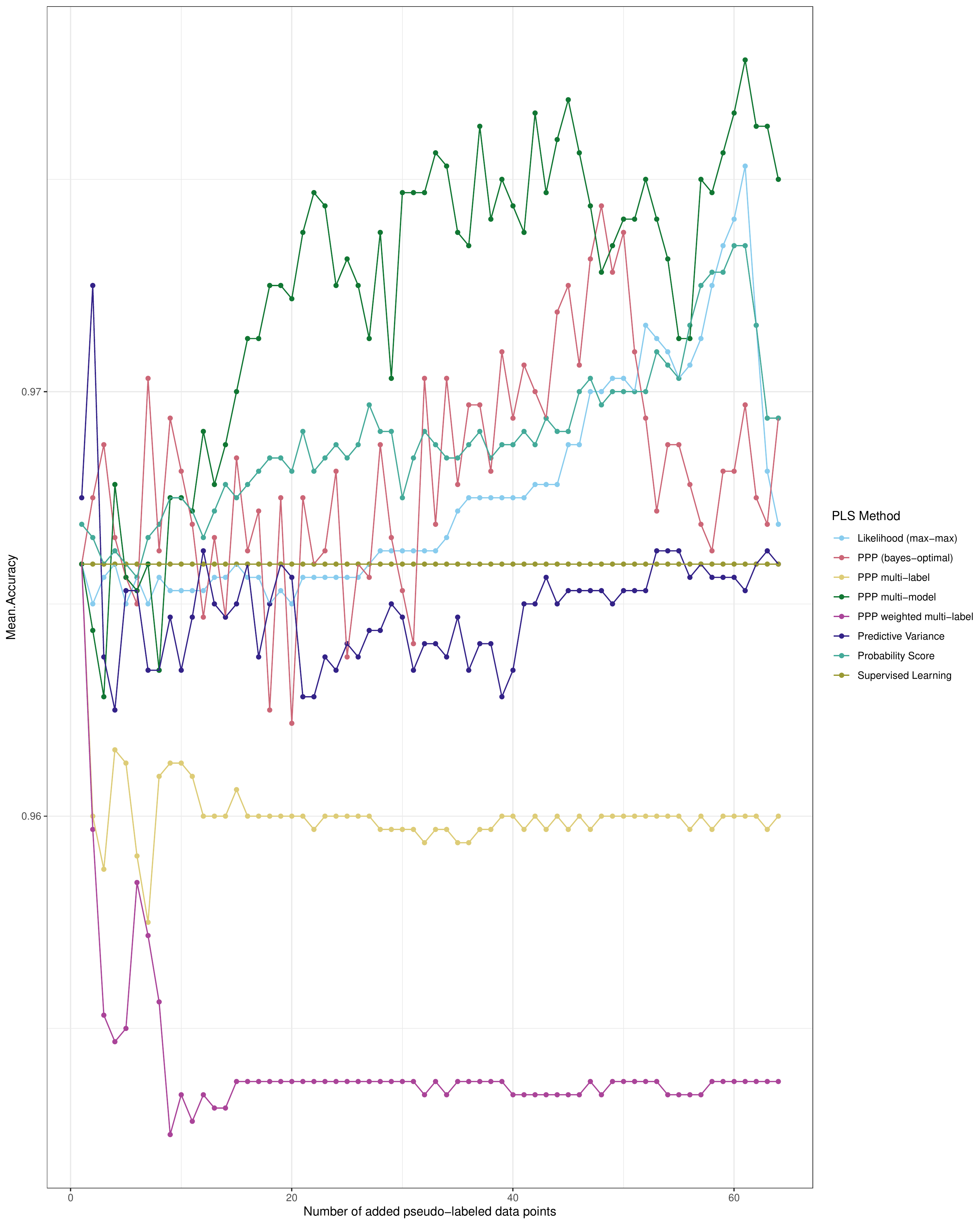}
    \caption{Results from Banknote Data.}
    \label{fig:res-banknote}
\end{figure}

Here, we spotlight the application of our methods on the banknote data~\cite{flury1988multivariate, tortora2014package} that contains measures (diagonal length, bottom margin, length of bill) of 100 genuine and 100 counterfeit Swiss franc banknotes. The learning task at hand is to classify banknotes based on these covariates. Figure~\ref{fig:res-banknote} shows the average accuracy (evaluated on unseen test data, averaged over 40 repetitions) of different PLS methods for $80\%$ unlabeled data. For the multi-model approach, all possible covariate combinations were considered. While multi-model PPP outperforms competing PLS methods, the multi-label extension fails to even beat the supervised baseline. Apparently, it is not worth considering alternative classifications given the initial supervised accuracy is that high ($\sim 0.966$).

\section{Discussion}
\label{sec:discussion}

We have introduced a number of robust extensions of PLS, some of which in turn surfaced avenues for future work. For instance, the accumulated expected errors (section~\ref{sec:acc-errors}) could be used as adaptive learning rate in fractional Bayesian updating~\cite{zhang2006, grunwald2012safe, heide2020safe}. Future work might also focus on implementing and testing the generic generalizations based on $\alpha$-cuts, as introduced in section~\ref{sec:alpha-cuts}. Conclusively, PLS appears to be a promising field for applying existing fruitful frameworks for robust statistical learning such as generalized Bayesian updating using credal sets or more specific multi-model and multi-label robustification. Most of them can potentially be easily transferred to PLS when taking the view on PLS as decision problem. This might not only increase the credibility of the inference by weakening the assumptions, leaning on Manski's \say{law of decreasing credibility}~\cite{manski2003}. It can also, as preliminary evidence suggests, increase predictive performance substantially. In particular, our experiments indicate that considering alternative model specifications as well as non-predicted labels in PLS appears to be a promising and fruitful. Further research is also needed on clarifying interactions among different kind of robustifications (between multi-label and multi-model PLS, for instance).

\acks{%
Georg Schollmeyer would like to thank the LMU mentoring program for support. 
}

\section*{Author Contributions}
Julian Rodemann developed the main idea of robust PLS extensions that account for model selection, accumulation of error and covariate shift. He drafted and wrote the majority of the paper. Julian Rodemann further implemented robust PLS. He also conceived and conducted the experimental analyses. Christoph Jansen contributed the idea of deploying $\alpha$-cut updating rules for robust PLS. Its regret-based adaption was developed by Julian Rodemann. Christoph Jansen contributed several passages on solving robust PLS problems w.r.t. generalized stochastic dominance. Georg Schollmeyer and Christoph Jansen also aided with making technical notations more concise. Thomas Augustin, Georg Schollmeyer, Christoph Jansen and Julian Rodemann further contributed by stimulating discussions and detailed proof-reading.

\bibliography{bib}

\end{document}